\setlist{nolistsep}
\newcommand{\tensor}[1]{\underline{ \mathbf{#1} }}
\newcounter{ALC@tempcntr}% Temporary counter for storage
\newcommand{\ourproblem}{\textit{Trapped Under Ice}\xspace}
\newcommand{\method}{\textsc{IceBreaker}\xspace}
\newcommand{\ibPlus}{\textsc{IceBreaker\small{$++$}}\xspace}
\newcommand{\codeurl}{\url{https://github.com/ravdeep003/adaptive-granularity-tensors}\space}
  \providecommand\BibTeX{{%
    \normalfont B\kern-0.5em{\scshape i\kern-0.25em b}\kern-0.8em\TeX}}}
\begin{document}
	\title{Adaptive Granularity in Tensors }
    \subtitle{A Quest for Interpretable Structure}
    % \author{Anonymous authors}
    %  \institute{Anonymized for review}

\author{Ravdeep S Pasricha} \author{Ekta Gujral} \author{Evangelos E. Papalexakis}
\affiliation{%
  \institution{Department of Computer Science and Engineering, University of California Riverside}
  \streetaddress{900 University Avenue}
  \city{Riverside}
  \state{California}
  \country{USA}
  \postcode{92521}
  }
\email{rpasr001@ucr.edu}
\email{egujr001@ucr.edu}
\email{epapalex@cs.ucr.edu}

\renewcommand{\shortauthors}{R. Pasricha, et al.}

	\begin{abstract}
Data collected at very frequent intervals is usually extremely sparse and has no structure that is exploitable by modern tensor decomposition algorithms. Thus the utility of such tensors is low, in terms of the amount of interpretable and exploitable structure that one can extract from them. In this paper, we introduce the problem of finding a tensor of adaptive aggregated granularity that can be decomposed to reveal meaningful latent concepts (structures) from datasets that, in their original form, are not amenable to tensor analysis. Such datasets fall under the broad category of sparse point processes that evolve over space and/or time. To the best of our knowledge, this is the first work that explores adaptive granularity aggregation in tensors. Furthermore, we formally define the problem and discuss what different definitions of ``good structure'' can be in practice, and show that optimal solution is of prohibitive combinatorial complexity. Subsequently, we propose an efficient and effective greedy algorithm called \method, which follows a number of intuitive decision criteria that locally maximize the ``goodness of structure'', resulting in high-quality tensors. We evaluate our method on synthetic, semi-synthetic  and real datasets. In all the cases, our proposed method constructs tensors that have very high structure quality. 
\end{abstract}

	\keywords{Tensor analysis, unsupervised learning}
	\maketitle

	\section{Introduction}
\label{sec:intro}
In the age of big data, applications deal with data collected at very fine-grained time intervals. In many real world applications, the data collected spans long periods of time and can be extremely sparse. For instance, a time-evolving social network that records interactions of users every second results in a very sparse adjacency matrix if observed at that granularity. Similarly, in spatio-temporal data, if one considers GPS data over time, discretizing GPS coordinates based on the observed granularity can lead to very sparse data which may not contain any visible and useful structure.
 How can we find meaningful and actionable structure in these types of data?  A great deal of such datasets are multi-aspect in nature and hence can be modeled using tensors. For instance, a three-mode tensor can represent a time-evolving graph capturing user-user interactions over a period of time, measuring crime incidents in a city community area over a period of time \cite{frosttdataset2017}, or measuring traffic patterns \cite{zheng2014urban}.  Tensor decomposition has been used in order to extract hidden patterns from such multi-aspect data \cite{sidiropoulos2017tensor, papalexakis2016tensors,kolda2009tensor}. However the degree of sparsity in the tensor, which is a function of the granularity in which the tensor is formed, significantly affects the ability of the decomposition to discover ``meaningful'' structure in the data.
 
\begin{figure}[t]
	\begin{center}
		\includegraphics[width = 0.90\textwidth]{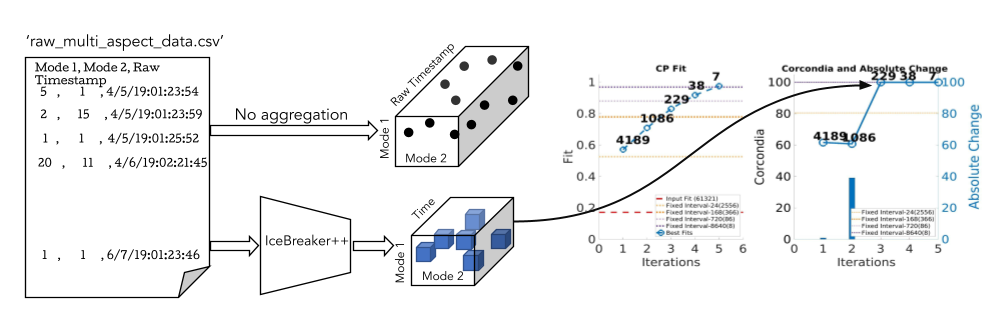}
		%\caption{Starting from raw CSV files, \method discovers a tensor that has good structure}
		\caption{Starting from raw CSV files, \method discovers a tensor that has good structure (under various measures of quality, including interpretability and predictive quality), outperforming traditional fixed aggregation heuristics. Furthermore, \method using various notions of locally optimal structure, discovers different resolutions in the data.}%
		
		\label{fig:crown}
	\end{center}

\end{figure}
%\reminder{ The problem: high level definition, significance }
Consider a dataset which can be modeled as three-mode tensor, where the third mode is temporal as shown in Figure \ref{fig:crown}. If the granularity of the temporal mode is too fine (in milliseconds or seconds), one might end up with a tensor that is extremely long on the time mode and where each instance of time has very small number of entries. This results in a extremely sparse tensor, which typically is of very high rank, and which usually has no underlying exploitable structure for widely popular and successful tensor decomposition algorithms \cite{sidiropoulos2017tensor,papalexakis2016tensors,kolda2009tensor}. However, as we aggregate data points over time, exploitable structure starts to appear (where-by ``exploitable'' we define the kind of low-rank structure that a tensor decomposition can successfully model and extract). In this paper we set out to explore what is the best such data-driven aggregation of a tensor which leads to better, exploitable, and interpretable structure, and how this fares against the traditional alternative of selecting a fixed interval for aggregation.

As far as tackling the problem above, there is considerable amount of work that focuses on a special case, that of aggregating edges of a time evolving graph into ``mature'' adjacency matrices based on certain graph properties \cite{soundarajan2016generating,sulo2010meaningful,sun2007graphscope}.
In our work, however, we address the problem in more general terms, where the underlying data can be any point process that is observed over time and/or space, and where the aggregation/discretization of the corresponding dimensions directly affects our ability to extract interpretable patterns via tensor decomposition. Effectively, as shown in Figure \ref{fig:crown}, in this paper we work towards automating the data aggregation starting from raw data into a well-structured tensor. This paper is based on the preliminary work which has appeared in arxiv \cite{pasricha2019adaptive} and non archival workshop \cite{mlg2020_35}.

Our contributions in this work are as follows:
\begin{itemize}
	\item \textbf{Novel Problem Formulation}: We formally define the problem of optimally aggregating a tensor, which is formed from raw sparse data in their original level of aggregation, into a tensor with exploitable and interpretable structure. We further show that solving this problem optimally is computationally intractable. To the best of our knowledge, this paper is the first to tackle this problem in its general form, and we view our formulation as the first step towards automating the process of creating well-behaved tensor datasets.
	\item \textbf{Practical Algorithm}: We propose a practical, efficient, and effective algorithm that is able to produce high-quality tensors from raw data without incurring the combinatorial cost of the optimal solution. Our proposed method follows a greedy approach, where at each step we decide whether different ``slices'' of the tensor are aggregated based on a variety of intuitive functions that characterize the ``goodness of structure'' locally.
	
	\item \textbf{Experimental Evaluation}: We extensively evaluate our proposed method on synthetic, semi-synthetic and real data where we use popular heuristic measures of structure goodness to measure success. Furthermore, we conduct a data mining case study on a large real dataset of crime over time in Chicago, where we identify interpretable hidden patterns in multiple time resolutions.
\end{itemize}

We make our implementation publicly available\footnote{\codeurl} in order to encourage reproducibility of our results.

	\section{Problem Formulation}
\label{sec:problem}

\subsection{Tensor Definition and Notations}
Tensors are multi-dimensional extensions of matrices, and tensor decompositions are a class of methods that extract latent structure from tensor datasets by extending techniques such as Principal Component Analysis and Singular Value Decomposition. The different ``dimensions'' of a tensor are usually referred to as ``modes''. In this paper, we focus on the CANDECOMP/PARAFAC (henceforth refered to as CP for brevity) decomposition \cite{carroll1970analysis, harshman1970foundations}, which is the ``rank decomposition'' of a tensor, i.e., the decomposition of an arbitrary tensor into a sum of $R$ rank-one tensors. Mathematically, for a three-mode tensor $\tensor{X}$, the CP decomposition is $\tensor{X} \approx \displaystyle{ \sum_{r=1}^R  \mathbf{A}(:,r)\circ \mathbf{B}(:,r)\circ\mathbf{C}(:,r)  }$, where $\circ$ is the generalized outer product. Matrices $\mathbf{A,B,C}$ are called ``factor matrices'', and each column corresponds to a latent pattern, directly relating an entity of the corresponding mode to a value that can be roughly construed as a soft clustering coefficient \cite{papalexakis2012k}. CP has arguably been the most popular tensor decomposition model in applications where the interest is to extract interpretable patterns for exploratory analysis, and thus, we adopt this decomposition model as our standard in this work. In the interest of space, we refer the reader to a number of available surveys \cite{sidiropoulos2017tensor,papalexakis2016tensors,kolda2009tensor}.
We denote tensors as $\tensor{X}$ and matrices as $\mathbf{X}$, and we adopt Matlab-like notation for indexing.

\subsection{Tensor decomposition quality}
Unsupervised tensor decomposition, albeit very popular, poses a significant challenge: how can we tell whether a computed decomposition is of ``high quality'', and how can we go about defining ``quality'' in a meaningful way? Unfortunately, this happens to be a very hard problem to solve \cite{papalexakis2016automatic}, 
and defining a new measure of quality is beyond the scope of this paper. However, there has been significant amount of work in that direction, which basically boils down to 1) model-based measures, where the quality is measured by how well a given decomposition represents the intrinsic hidden structure of the data, and 2) extrinsic measures, where the quality is measured by how well the computed decomposition factors perform in a predictive task. However, extrinsic measures do not generalize, as they specialize to a particular labeled task, and in general we cannot assume that labels will be available for the data at hand. Thus, in this work we focus  on model-based measures, which can provide a general solution.

In model-based measures, the most straightforward one is the fit, i.e., how well does the decomposition approximate the data under the chosen loss function, in a {\em low rank}. Low rank is key, because the number of components (rank) has to be as small and compact as possible in order to lend itself to human evaluation and exploratory analysis. However, fit has been shown to be unstable and prone to errors especially in real and noisy data, thus the community has collectively turned its attention to more robust measures such as the Core Consistency Diagnostic (CORCONDIA for short) \cite{bro2003new}, which measures how well the computed factors obey the CP model.

Both types of quality measure capture different elements of what an end-user would deem good in a set of decomposition factors. In this paper, we are going to use such popular measures of quality in order to characterize the quality of a given tensor dataset $\tensor{X}$. In order to do so, we assume that we have a function $\mathcal{Q}\left( \tensor{X} \right)$ which, optimizes the quality measure $q \left( \right)$ for a given tensor over all possible decomposition ranks $R$ \footnote{In practice, this is done over a small number of low ranks, since low-rank structure is desirable.}, i.e.,

\[
	\mathcal{Q}_i\left( \tensor{X} \right) = \max_{R} q_i \left( \tensor{X}, \mathbf{A,B,C} \right)
\]
where $\mathbf{A,B,C}$ are the $R$-column factor matrices for $\tensor{X}$.
Finally, a useful operation is the $n$-mode product, where a matrix $\mathbf{W}$ is multiplied by the $n$-th mode of a tensor (predicated on matching dimensions in the $n$-th mode of the tensor and the rows of the matrix), denoted as $\tensor{X}\times_n \mathbf{W}$. For instance, an $I\times J\times K$ tensor where $n=3$ and $\mathbf{W}$ of size ${K\times K^*}$, the product $\tensor{X}\times_n \mathbf{W}$ multiplies all third mode slices of $\tensor{X}$ with $\mathbf{W}$ and results in a $I\times J \times K^*$ tensor.
\subsection{The \ourproblem problem}

To give reader an intuition of the problem, consider an example of time-evolving graph which captures social activity  over the span of some time. This example can be modeled as three-mode tensor $\tensor{X}$ of dimensions $I \times J \times K$  where ``sender'' and ``receiver'' are the first two modes, ``time'' being the third mode, and non-zero entry in the tensor represents communications between user at a particular time. If the time granularity is extremely fine-grained (milliseconds or seconds), there might be only handful of data points at a particular time stamp causing resulting tensor to be extremely sparse and to have a high tensor-rank as a result. In that case, $\tensor{X}$ might not have any interpretable low-rank structure that can be exploited by CP. In this example we assume that the third mode (time mode) is too fine-grained but in reality any mode (one or more) can be extremely fine grained. For example, in spatio-temporal data, where the first two modes are latitude and longitude and the third mode is time, all three modes can suffer from the same problem. 

Given tensor $\tensor{X}$ which is created using the ``raw'' granularities,  how does one find a tensor (say $\tensor{Y}$) which has better exploitable structure and hence can be decomposed into meaningful latent structure. This, is informally the \ourproblem problem that we define here (which draws an analogy between the good structure that may exist within the data as being trapped under the ice and not visible by mere inspection). \ourproblem has an inherent assumption that the mode in which we aggregate is ordered (e.g., representing time or space), thus permuting the third mode will lead to a different instance of the problem.

More formally we define our problem as follows:
\begin{mdframed}[linecolor=red!60!black,backgroundcolor=gray!20,linewidth=2pt,topline=false,rightline=false, leftline=false]
%\begin{problem}[\ourproblem]
Given a tensor $\tensor{X}$ of dimensions $I \times J \times K$ 
%and estimated rank $R_0$ with its factor matrices $\mathbf{A,B,C}$. 
Find:\\
A tensor $\tensor{Y}$ of dimensions $I \times J \times K^*$ with $K^* \le K$  such that
%with low rank $R^*$ and factor matrices $\mathbf{A}^* ,\mathbf{B}^* ,  \mathbf{C}^*$, such that:

\[
	\max_{\mathbf{W}}  \mathcal{Q} \left(   \tensor{X} \times_{3} \mathbf{W}  \right)
\]	
where $\mathcal{Q}$ is a measure of goodness and $\mathbf{W}(i,j) = 1$ if slice $i$ in tensor $\tensor{X}$ is aggregated into slice $j$ in the resulting tensor, otherwise  $\mathbf{W}(i,j) = 0$.

%\end{problem}
\end{mdframed}
At first glance, \ourproblem might look like a problem amenable to dynamic programming, since it exhibits the optimal substructure property. However, it lacks the overlapping subproblems property:  there are overlapping subproblems across the set of different $\mathbf{W}$ matrices (e.g., two different matrices may have overlapping subproblems) but not within any single $\mathbf{W}$. Thus, we still have to iterate over $2^{K-1}$ $\mathbf{W}$'s refer subsection \ref{hardness} for more details.

\textbf{Structure of $\mathbf{W}$}: The matrix $\mathbf{W}$ has a special structure. Here we provide an example. Consider a three-mode tensor $\tensor{X}$ of dimensions $10\times10\times10$, with the third mode being the time mode. Suppose that the optimal level of aggregation for $\tensor{Y}$ is $K^* = 3$.
 
In this case, $\mathbf{W}$ is of size $3\times 10$ and an example of such matrix is:
\[\mathbf{W} = \begin{bmatrix}
1&1&1&0&0&0&0&0&0&0\\
0&0&0&1&1&1&0&0&0&0\\
0&0&0&0&0&0&1&1&1&1\\
\end{bmatrix}\]
This $\mathbf{W}$ aggregates first three slice of $\tensor{X}$ to form first slice of $\tensor{Y}$, next three to form the second slice and last four to form the third slice. 
No two $\mathbf{W}$ matrices will produce the same aggregation. They can have the same $K^*$ but order of aggregation of slices will be different.
\subsection{Solving \ourproblem optimally is hard}\label{hardness}
Solving \ourproblem optimally poses a number of hurdles. First and foremost, the hardness of the problem depends on the definition of function $\mathcal{Q}$, and most reasonable and intuitive such definitions are very hard to optimize since they are non-differentiable, non-continuous, and not concave. So far, in the literature, to the best of our knowledge, there are only heuristics for this quality function. Even so, those heuristic functions can only be evaluated on a single already fully-aggregated tensor, not a partially aggregated version thereof. Thus, \ourproblem can be only solved optimally via enumerating all admissible solutions and choosing the best. In order to conduct this enumeration, we need to calculate the cardinality of the set of all $\mathbf{W}$ for a given instance of the problem.

\begin{lemma}
	\label{lemma1}
	For an instance of a problem with $K$ initial slices, the cardinality of the set of all $\mathbf{W}$ is $2^{K-1}$
\end{lemma}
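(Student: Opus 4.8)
The plan is to show that the admissible $\mathbf{W}$ matrices are in bijection with the \emph{compositions} of the integer $K$ — that is, the ordered ways of writing $K$ as a sum of positive integers — and then invoke the classical fact that there are exactly $2^{K-1}$ such compositions.

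First I would pin down precisely which $0/1$ matrices count as admissible. From the structure described above, every row of $\mathbf{W}$ is a contiguous run of ones, the rows together partition the columns $\{1,\dots,K\}$ so that each original slice is assigned to exactly one aggregated slice, and the blocks appear in increasing order of their indices (since the aggregation mode is ordered). Consequently $\mathbf{W}$ is completely determined by its sequence of block sizes $(k_1,\dots,k_{K^*})$ with each $k_j\ge 1$ and $\sum_{j=1}^{K^*} k_j = K$, which is exactly a composition of $K$. Conversely, each such composition yields a unique admissible $\mathbf{W}$, so the map is a bijection.

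Next I would count the compositions via the standard ``gaps'' argument. Lay out the $K$ slices in their fixed order; there are $K-1$ gaps between consecutive slices, and choosing $\mathbf{W}$ is equivalent to deciding independently, at each gap, whether it is a boundary between two aggregated blocks or not. Each of these $K-1$ binary decisions is free, and distinct decision vectors give distinct compositions, so there are exactly $2^{K-1}$ of them. Combined with the bijection, this yields $|\{\mathbf{W}\}| = 2^{K-1}$.

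The counting step is elementary; the only real care needed is in the first step, namely verifying that the structural constraints on $\mathbf{W}$ (contiguity of the ones, each column covered exactly once, order-preservation) are exactly what is required for the map to compositions to be both well defined and a bijection — with no admissible $\mathbf{W}$ omitted and no two distinct $\mathbf{W}$ collapsing to the same composition, the latter being guaranteed by the earlier observation that no two $\mathbf{W}$ produce the same aggregation.
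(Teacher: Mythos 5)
Your proof is correct, and it rests on the same combinatorial object as the paper's: the $K-1$ gaps between consecutive slices, where an admissible $\mathbf{W}$ corresponds exactly to a choice of which gaps are block boundaries. The difference is in how the count is organized. The paper stratifies by the number of aggregated slices $K^*$, counts $\binom{K-1}{K^*-1}$ matrices for each fixed $K^*$ (choosing $K^*-1$ boundary gaps out of $K-1$), and then sums over $K^*$, invoking the identity $\sum_{k=0}^{K-1}\binom{K-1}{k}=2^{K-1}$. You instead count all boundary choices at once: each of the $K-1$ gaps is an independent binary decision, giving $2^{K-1}$ directly, with the composition language making the bijection explicit. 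Your route is marginally more elementary (no binomial identity needed) and sidesteps a small index-bookkeeping wrinkle present in the paper's summation, where the summation variable implicitly stands for $K^*-1$ rather than $K^*$; the paper's stratified count has the side benefit of also exhibiting the number of solutions at each fixed resolution $K^*$, which is relevant context for the algorithm's search space. You are also more careful than the paper in verifying that the map is well defined and injective (no two distinct $\mathbf{W}$ give the same aggregation), a point the paper asserts in the surrounding text rather than inside the proof.
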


\begin{proof}
	To get $K^*$ aggregated slices there are $\binom{K-1}{K^*-1}$ ways to choose each of them leading to a different $\mathbf{W}$. This is a number of ways that $K-1$ partition slots can be filled partitioned by ${K^*-1}$ blocks. In order to get the final number, we need to sum up over all potential $K^*$:
	\[
	\sum_{K^*=0}^{K-1} \binom{K-1}{K^*} = 2^{K-1}
	\]

\end{proof}

Direct corollary of the above lemma is that solving optimally \ourproblem requires calling the function $\mathcal{Q}$ $O\left( 2^K \right)$ times, which is computationally intractable. There may be small room for improvement by exploiting special structure in the set of all $\mathbf{W}$, however, given discontinuities in our objective function $\mathcal{Q}$, this is not be a feasible alternative either. In this paper we define proxy quality functions $\mathcal{Q}$ that lend themselves to partial evaluation on a partially aggregated solution, thus allowing for efficient algorithms
 Thus, in the next section we propose a fast greedy approach which locally optimizes different criteria quality.

	\section{Proposed Methods}
\label{sec:method}

In this section, we propose our efficient and effective greedy algorithm called \method which takes a tensor $\tensor{X}$ as an input, which has been created directly from raw data, and has no exploitable structure. and returns a tensor $\tensor{Y}$ which maximizes the interpretable and exploitable structure. The basic idea behind \method is to make a linear pass on the mode for which the granularity is suboptimal, and using a number of intuitive and locally optimal criteria for goodness of structure (henceforth referred to as {\em utility functions}), we greedily decide whether a particular slice across that mode needs to be aggregated\footnote{For the purposes of our work, we use matrix addition as aggregation of slice but this might not be the case and would depend on the problem domain. Other aggregation functions that can be used are OR, min, max, depending on the application domain (e.g., binary data).} into an existing slice or contains good-enough structure to stand on its own. 
\method can choose from a number of intuitive utility functions which are based on different definitions of good quality in matrices. 
\subsection{The \method algorithm}
Algorithm \ref{alg:algo1} gives a high level overview of \method. More specifically, the algorithm takes a three-mode tensor $\tensor{X}$ of dimension $I\times J \times K$ as an input and loops over all the $K$ slices of tensor $\tensor{X}$. Two slices next to each other get aggregated into a single slice if a certain utility function {\em has stabilized}, i.e., if aggregating the two slices does not offer any additional utility (larger than a particular threshold), then the second slice should not be aggregated with the first, and should mark the beginning of a new slice. 

Consider a three-mode tensor $\tensor{X}$ with time as third mode of dimension $I \times J \times K$ is ran through \method with a particular utility function. Our algorithm iterates over the time mode ($K$ slices) and aggregates slices as decided by the utility function. \method is agnostic to utility function used. Let us consider a slice that has been aggregated into a single slice from indices $i$ to $j-1$ called previous slice and another aggregated slice from indices $i$ to $j$ called a candidate slice. Both previous and candidate slice are passed to utility function separately to obtain a value each called previous and current value respectively. These values are compared (line $5$ in algorithm \ref{alg:algo1})  to decide whether $j^{th}$ slice is absorbed(line $6$ in algorithm \ref{alg:algo1}) into previous slice or previous slice has stabilized and entry is added in $W$ to indicate which indices of tensor $\tensor{X}$ are aggregated together(line $8-9$ in algorithm \ref{alg:algo1}). Now $j^{th}$ slice becomes the previous slice and aggregated slice of $j$ and $j+1$ become the candidate slice, the whole process is repeated until all the slices are exhausted.
\begin{center}
  \begin{algorithm}[h]
     \caption{\method} \label{alg:algo1}
	 \begin{algorithmic}[1]
      \REQUIRE Tensor $\tensor{X}$ of dimension $I \times J \times K$ 
      \ENSURE Tensor $\tensor{Y}$ of dimension $I \times J \times K_1 $and matrix $\mathbf{W}$ of size $K_1\times K$
      \STATE $i=1; j=2$
      \STATE $previousValue = UtilityFunction(X(:,:,i))$ \\
      \WHILE{$j\leq K$}
      %\While{}{
      		\STATE $currentValue = UtilityFunction(sum(X(:,:,i:j),3)$\\
      			\IF{$previousValue \lesseqqgtr currentValue$}
	      			\STATE {j = j+1} \COMMENT{Aggregate Slice} \\  
      			\ELSE 	

		      		\STATE	\COMMENT{Create a New Slice}\\ {Add a row in $\mathbf{W}$ with value as 1 for indices $i$ to $j-1$.} \\
      				\COMMENT{Update indices for next candidate slice}
      				\STATE	$i=j; j=j+1;$ 
      		        \STATE previousValue = UtilityFunction(X(:,:,i));
      		     \ENDIF
      		      
       \ENDWHILE
      			\STATE $\tensor{Y} = \tensor{X} \times_{3} \mathbf{W}$ \\
      		    \STATE return $\tensor{Y}$ and $\mathbf{W}$	
     \end{algorithmic}
     %\label{alg:algo1}
  \end{algorithm}
\end{center}

Note that \method's complexity is {\em linear} in terms of the slices $K$ of the original tensor, and its overall complexity depends on the specific utility function used (which is called $O(K)$ times). 

\subsubsection{Utility functions:} 
\label{utility}
In this subsection, we summarize a number of intuitive utility functions that we are using in this paper. This list is by no means exhaustive, and can be augmented by different functions (or function combinations) that capture different elements of what is good structure and can be informed by domain-specific insights.

\begin{enumerate}
	\item \textbf{Norm:} We use multiple norm types to find adaptive granularity of a tensor. For a given threshold, if rate of change of norm between previous and candidate slice is less than the threshold, candidate slice is not selected. Our assumption in this case is no significant amount of information is being added to previous slice and is considered to have been stabilized. Matrix $\mathbf{W}$ is updated accordingly with indices of the previous slice (aggregated slices in previous slice). Otherwise the candidate slice is selected and the process continues until all the slices are exhausted. Different norms demonstrated in this work are Frobenius, 2-norm, and Infinity norm.
	\item \textbf{Matrix Rank:} In case of matrix rank, we focus on the $95\%$ reconstruction rank, which is typically much lower than the full rank of the data, but captures the essence of the number of components within the slice. In this case, we consider previous slice to be stabilized if the matrix-rank of previous slice decreases by addition of new slice, no more slices are added and an entry in matrix $\mathbf{W}$ is added. We keep aggregating slices if the matrix-rank of the slice is increasing or remains constant.
	\item \textbf{Missing Value Prediction:} If a piece of data has good structure, when we hide a small random subset of the data, the remaining data can successfully reconstruct the hidden values, under a particular model that we have chosen. To this end, we employ a variant of matrix factorization based collaborative filtering \cite{koren2009collaborative} as a utility function to see how good is the aggregated matrix in predicting certain percent of missing values. This utility function takes percent of missing value as a parameter, hides those percent of non zeros values in the matrix. Our implementation of matrix factorization with Stochastic Gradient Descent tries to minimize the loss function:
$
	\min_{\mathbf{U,V}}  \sum_{i,j \in \Omega } \mathcal{RMSE} \left(   \mathbf{A_{ij}} - \mathbf{U_{i, :}}\cdot\mathbf{V_{: ,j}} \right)
$
	where $\mathbf{A}$ is a given slice, $\mathbf{U,V}$ are factor matrices for a given rank (typically chosen using the same criterion as the matrix rank above), and $\Omega$ is the set of {\em observed} (i.e., non-missing) values. In order to create a balanced problem, since we are dealing with very sparse slices, we conduct {\em negative sampling} where we randomly sample as many zero entries as there are non-zeros in the slice, and this ends up being the $\Omega$ set of observed values.
\end{enumerate}

\subsection{The \ibPlus algorithm}
\method algorithm returns a tensor $\tensor{Y}$ as an output which is considered to have an exploitable and better structure than the input tensor $\tensor{X}$. The idea behind \ibPlus is to recursively feed the output back to \method until the third mode is reduced to a single slice (matrix) or the dimenison of third mode does not change. 
\method algorithm returns a tensor associated with each utility function. So if we used 5 utility functions, we would get 5 tensors associated with each of them. Now we select the tensor with highest CP Fit(see \ref{exp:measure}), use that as input for \method and we repeat this process until the stopping condition is met. The output of each iteration is a candidate tensor. At the end we have multiple tensors (one for each iteration) which has different temporal resolutions, which can help us get a tensor with optimal resolution based on the evaluation measures used. Algorithm \ref{alg:algo2} describes the process discussed in this section.

\begin{center}
  \begin{algorithm}[h]
  	 \small
     \caption{\ibPlus} \label{alg:algo2}
	 \begin{algorithmic}[1]
      \REQUIRE Tensor $\tensor{Y}$ of dimension $I \times J \times K$
      \ENSURE One Tensor for each iteration	
      \WHILE{$K\leq 1$}
      %\While{}{
            \FORALL{ Uitlity Functions}
      		\STATE  [$\tensor{Z}$, $\mathbf{W}$] = \method($\tensor{Y}$)\\
      		\ENDFOR
      		\STATE Select $\tensor{Z}$ with the best Realtive fit \\
      		\COMMENT{Third mode dimension}
      		\STATE $K_1 = size(\tensor{X},3)$
      		
      		\IF{$K_1 == K$}
	      		\STATE break;\\
      			\ELSE 
      			\STATE $K = K1$ \\
      			\STATE $\tensor{Y} = \tensor{Z} $\\
      		\ENDIF
      		
       \ENDWHILE
      		
      		    \STATE return one Tensor for each iteration	
     \end{algorithmic}
     %\label{alg:algo1}
  \end{algorithm}
  %\vspace{-0.2in}
\end{center}

	\section{Experimental Evaluation}
\label{sec:experiments}
In this section we present a thorough evaluation of \ibPlus using variety of data, including synthetic, semi-synthetic and real data. We empirically evaluate our analysis using a number of criteria described in detail below. We implement our method in Matlab using tensor toolbox library \cite{TTB_Software}. 

\subsection{Evaluation measures} 
\label{exp:measure}

When formulating the problem, we did not specify a quality function $\mathcal{Q}$ to be maximized, nor did we use such a function in our proposed method. The reason for that is because we reserve the use of different quality functions as a form of evaluation. In particular, we use the two following notions of quality:
\begin{itemize}
    \item{\bf CP Fit:} To evaluate effectiveness of our method, we compute CP fit of the computed tensor for a particular rank with respect to the Input tensor.
    
\begin{equation}
Relative\ Fit= 1 - \Big(\frac{||\tensor{X}_{Input}-\tensor{X}_{computed}||_F}{||\tensor{X}_{Input}||_F}\Big)
\end{equation}

	\item {\bf CORCONDIA}: To evaluate the {\em interpretability} of the resulting tensor we employ Autoten \citep{papalexakis2016automatic} that given a tensor and some estimated tensor rank, returns a CORCONDIA \citep{bro2003new} score and low rank that provides best attainable tensor decomposition quality in a user-defined search space. 
\end{itemize}

We should note at this point that the two quality measures above are far from continuous and monotonic functions, thus we do not expect that our method progresses the quality will monotonically increase. Thus, we calculate the quality for the final solution of \ibPlus, and we reserve investigating whether monotonic and well-behaved quality functions exist for future work. 

In our experiments we used 5 utility functions (see \ref{utility}) namely Frobenius norm, 2-norm, Infinity norm, Matrix Rank and Missing Value Prediction. In case of synthetic datasets we ran all the utility functions once except for Missing Value Prediction which we ran for 10 times. In case of both semi-synthetic and real datasets, in the interest of computational efficiency, we ran all the utility functions once. 

\subsection{Baseline methods}
A naive way to find tensor $\tensor{Y}$ can be by aggregating time mode based on some fixed intervals. If time granularity was in milliseconds, then combining one thousand slices to form slices of seconds granularity reducing the third dimension of tensor $\tensor{X}$ from $K$ to $K/1000$. This can be applied incrementally from seconds to minutes and so on to find a tensor which has some exploitable structure.  We compare the resulting tensor $\tensor{Y}$ determined by \method against tensors constructed with fixed aggregations. For fixed aggregation we aggregate the temporal with window size of $10,\ 100$ and $1000$ for synthetic data. For semi-synthetic and real datasets we use appropriate time windows accordingly.
\subsection{Performance for synthetic data}
\noindent{\bf Creating synthetic data}:
In order to create synthetic dataset, we follow a two-step process,
\begin{enumerate}
    \item We create a random sparse tensor of specific sparsity.
    \item Subsequently, we randomly distribute (drawn from uniform distribution) non zero entries in each slice over some fixed number of slice as explained in below example.
\end{enumerate}
\textbf{Example:} Consider a three-mode tensor $\tensor{X}$ of dimension $I\times J \times K$, for purpose of this example consider $K=4$ as shown in fig \ref{fig:syntheticDataset}. Now for each slice of size $I\times J$, distribute randomly (drawn from Uniform distribution) all the non-zeros entries across $W$ slices preserving the $I$ and $J$ indices, creating a tensor of size $I\times J \times W$. Now append all the tensors in the same order as they appeared in the original tensor, we get a resulting tensor of size $I \times J \times 4W$, which is used as an input for \method. So if the original tensor if of size $I\times J \times K$ and bucket size $W$, the resulting tensor is of size $I\times J \times KW$ approximately\footnote{The number of slice can be less than $KW$, since slice for each non-zero value is selected randomly, there can be a case where a slice is not selected}.
Table \ref{table:tsyndataset} shows the synthetic data used for experiments.
\begin{figure}[h]
	\begin{center}
		\includegraphics[width = 0.80\textwidth]{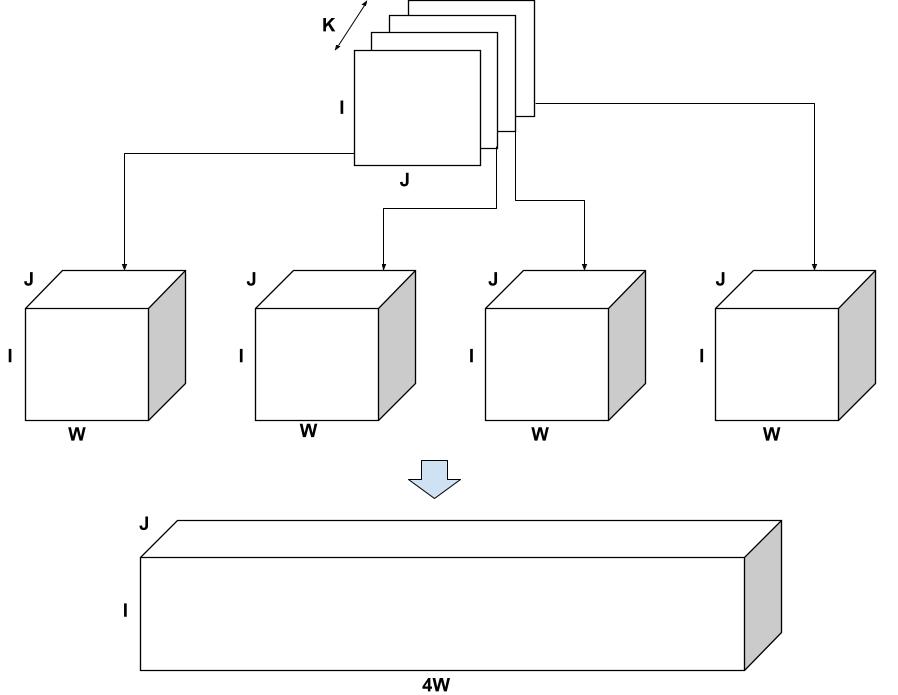}
        \caption{\small{Creating Synthetic Data}}
		\label{fig:syntheticDataset}
	\end{center}
	%\vspace{-0.3in}
\end{figure}

\begin{table*}[h!]
\small
	\begin{center}
		\begin{tabular}{ |c|c|c|c|c|}
			\hline
			Dataset & Original Dimension & Window size($W$) &  Approximate Final Dimension & Number of datasets  \\
			\hline
			SD1 & {$100 \times 100 \times 10$} & 50 & {$100 \times 100 \times 500$} & 10\\
		      \hline
			SD2 & {$100 \times 100 \times 100$} & 50 & {$100 \times 100 \times 5000$} & 10\\
		      \hline
		\end{tabular}
		\caption{Table of Synthetic Datasets analyzed}
		\label{table:tsyndataset}
	\end{center}
%	\vspace{-0.4in}
\end{table*}
\begin{table*}[h!]
	\begin{center}
		\begin{tabular}{ |c|c|c|c|}
			\hline
			Dataset & Original Dimension & Window size($W$) &  Approximate Final Dimension  \\
	
		      \hline
		    Enron Weekly & {$184 \times 184 \times 44$} & 4 & {$184 \times 184 \times 176$}\\
		    \hline
		      Enron Daily & {$184 \times 184 \times 44$} & 30 & {$184 \times 184 \times 1320$}\\
		      \hline
		       Enron Hourly & {$184 \times 184 \times 44$} & $720$ & {$184 \times 184 \times 31680$}\\
		       \hline
		\end{tabular}
		\caption{Table of Semi-synthetic Datasets analyzed}
		\label{table:tsemisyndataset}
	\end{center}
%	\vspace{-0.4in}
\end{table*}

\noindent{\bf Results for synthetic data}: In order to evaluate the performance of \ibPlus, we measure CORCONDIA and fit on 10 synthetic datasets for both type of datasets as mentioned in table \ref{table:tsyndataset}. In interest of conserving space we only show one set of results for both synthetic datasets. The leftmost part of the Figure \ref{fig:synthetic1} and \ref{fig:synthetic2} shows the best fit at end of each iteration. The number on top of the dots represent the dimension of the third mode after each iteration. The dotted line in the plot show the fit of the input tensor and fixed intervals tensor.\footnote{The number in the parenthesis represents the dimension of the third mode for that tensor.}
The rightmost part of the Figure \ref{fig:synthetic1} and \ref{fig:synthetic2} shows the CORCONDIA computed at the end of each iteration and absolute change of CORCONDIA. Absolute change of CORCONDIA is computed as shown below:
$$abs(corcondia(j+1) - corcondia(j))$$

The dotted line in the plot represent CORCONDIA value for the fixed intervals tensor.
When there is sudden drop in the value of CORCONDIA we consider the iteration before as an suitable candidate for tensor analysis.  In the case of SD1 that would be iteration number 2 and resulting tensor of size $100 \times 100 \times 8$. In the case of SD2 that would also be iteration number 2 and resulting tensor of size $100 \times 100 \times 57$

\begin{figure}[ht]
	\begin{center}
	    \includegraphics[width =0.8\textwidth]{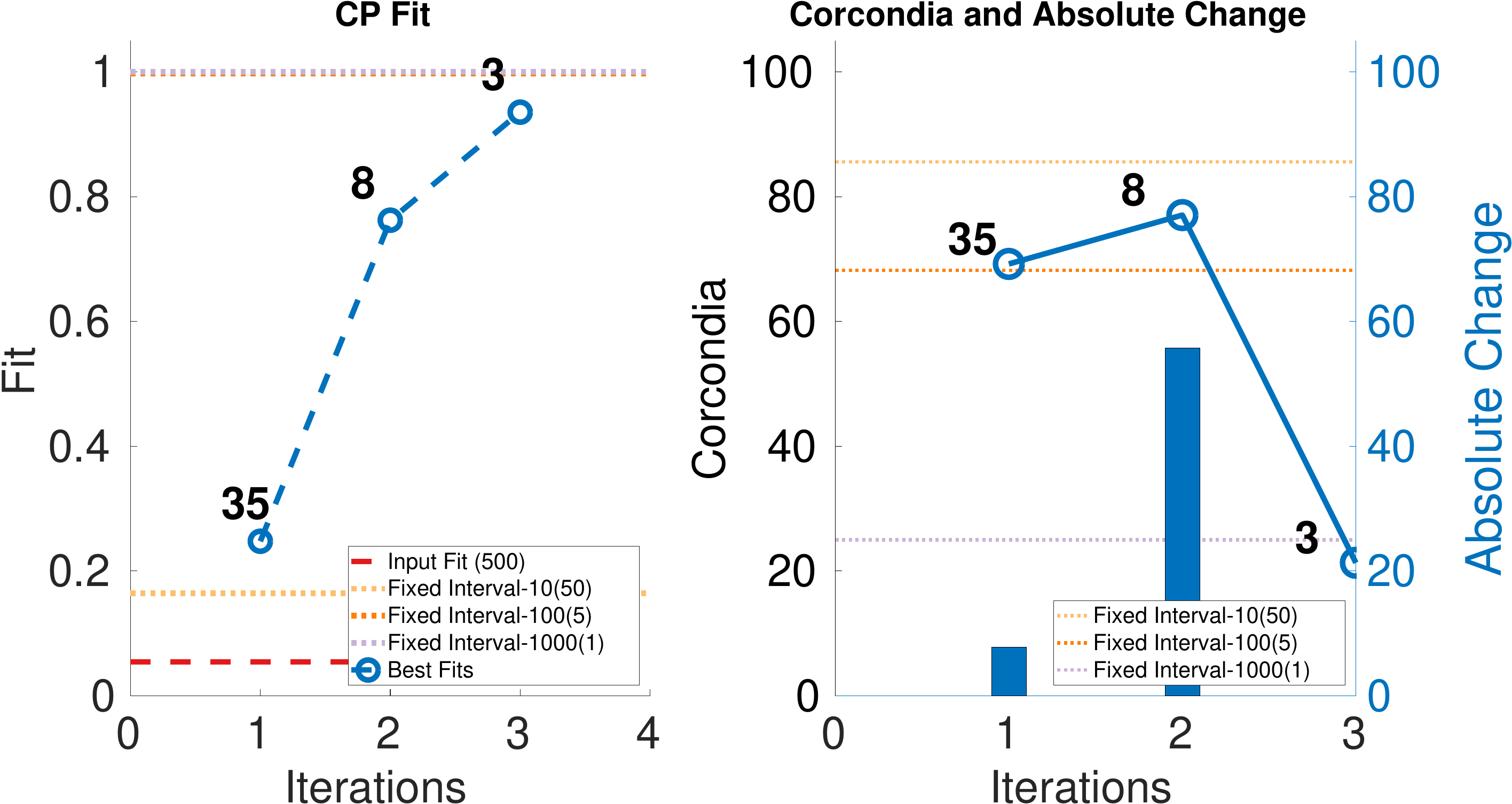}
        \caption{CP fit and Corcondia of best fit tensor \& its absolute change at each iteration for SD1.}
		\label{fig:synthetic1}
	\end{center}
\end{figure}
\begin{figure}[ht]
	\begin{center}
		\includegraphics[width =0.8\textwidth]{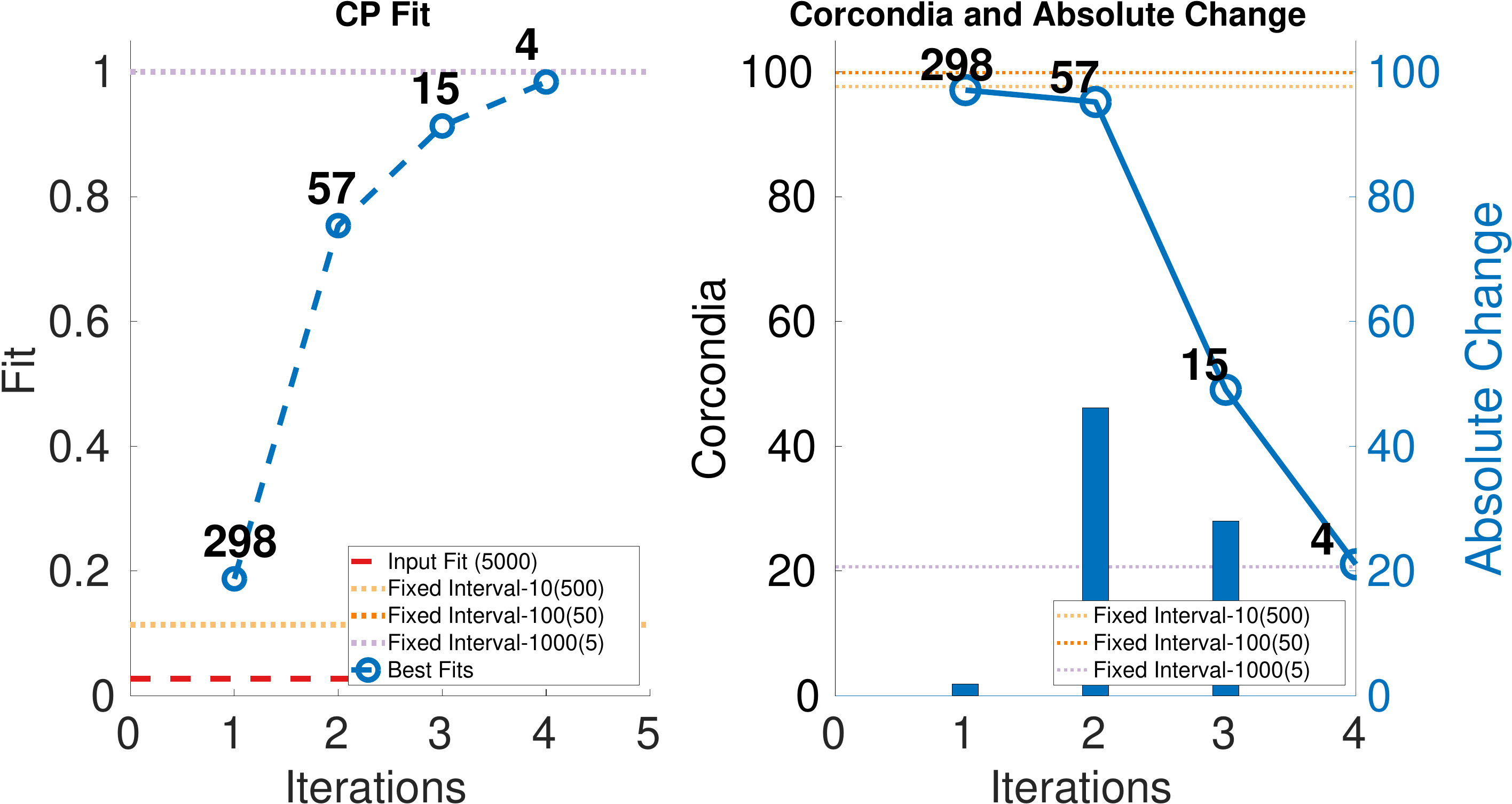}
		\caption{CP fit and CORCONDIA of best fit tensor \& its absolute change at each iteration for SD2.}
		\label{fig:synthetic2}
	\end{center}
\end{figure}

\subsection{Performance for semi-synthetic data}
\noindent{\bf Creating semi-synthetic data}: Here we used Enron dataset\cite{enrondataset, bader2007temporal}, which is dataset of number of email exchanges between employees spread over 44 months. Each month is represented by a matrix. To create the semi-synthetic data, we use the step 2 as described in the generation of synthetic case. 
We take the non-zero elements and randomly distribute non zero entries in each slice over some fixed number of slice. For this dataset we converted the monthly data into weekly, daily and hourly data. Non-zero entries in each slice was distributed over 4 different candidate slices of monthly (roughly aprroximating 4 weeks as a month). In the case of daily each slice of monthly data was distributed over 30 different slices as mentioned in table \ref{table:tsemisyndataset} and finally in the case of hourly each non zero entry in the monthly slice was distributed over 720 slices ($24 \times 30$).

\noindent{\bf Results for semi-synthetic data}: The leftmost parts of Figures \ref{fig:enronWeekly}, \ref{fig:enronDaily} and \ref{fig:enronHourly} show the fit of different iterations and rightmost part of the figures Figures \ref{fig:enronWeekly}, \ref{fig:enronDaily} and \ref{fig:enronHourly} show the CORCONDIA computed at different iterations. In the case of Enron Weekly we see a sudden drop in CORCONDIA after iteration 1 as shown in Figure \ref{fig:enronWeekly} and corresponding tensor is of size $184 \times 184 \times 17$.
In the case of Enron Daily we don't see a significant change in CORCONDIA values in two iterations and corresponding tensors are of size $184 \times 184 \times 78$ and $184 \times 184 \times 5$ giving us tensors of different granularity.

In the case of Enron Hourly we see a drop in CORCONDIA after iteration 1 and 2 as shown in Figure \ref{fig:enronHourly}. In this case practitioner can make choice between a tensor of resolution $184\times 184\times 469$ or $184\times184\times34$ depending on what evaluation metric they value more, fit, CORCONDIA or both. Tensor after iteration 2 ($184\times184\times34$) seems to have good score for both fit and CORCONDIA whereas Tensor after iteration 1 has good COROCONIA score but not a good CP fit.

\begin{figure}[ht]
	\begin{center}
		\includegraphics[ width=0.8\textwidth]{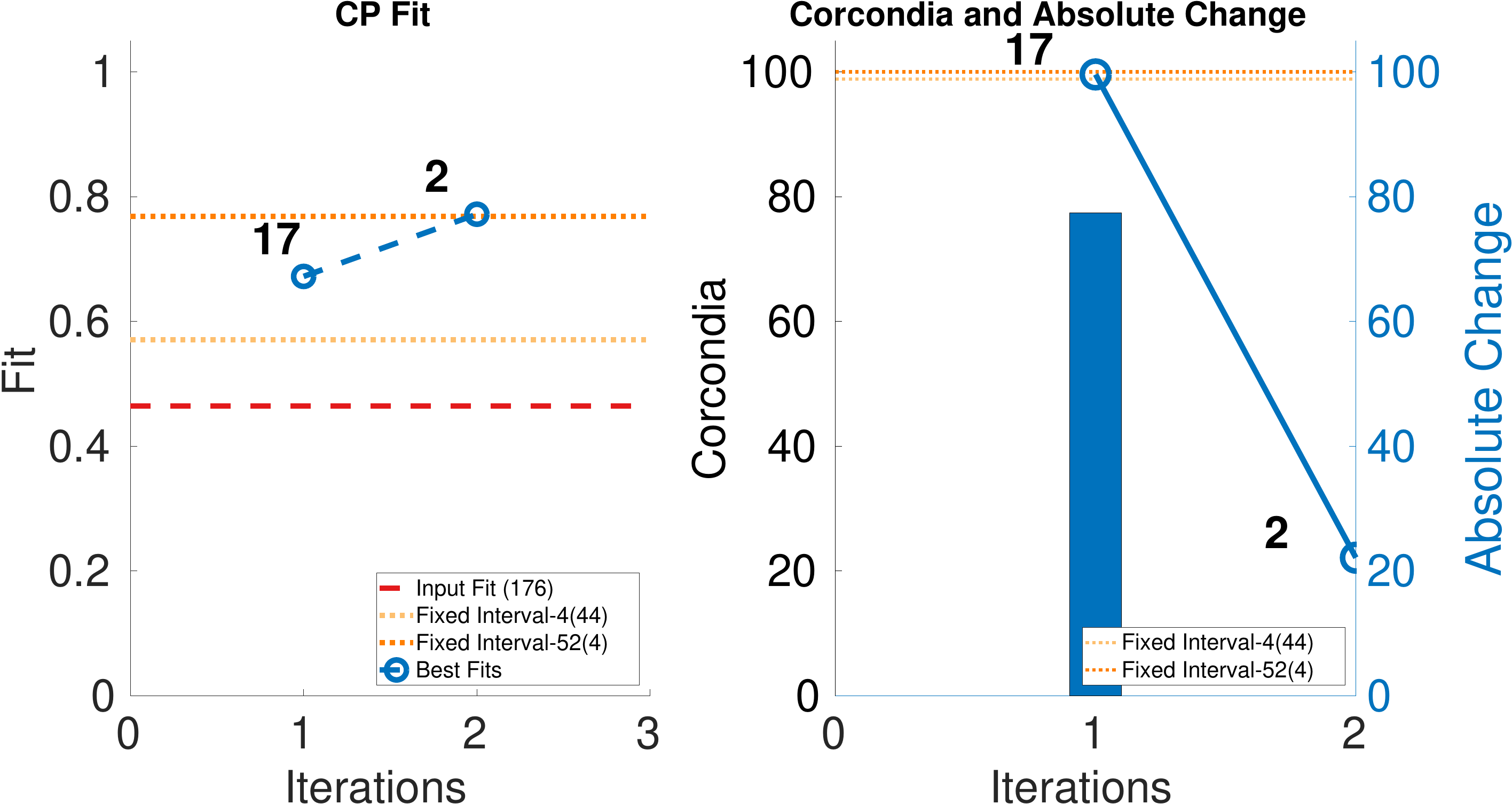}
		\caption{CP fit and CORCONDIA of best fit tensor \& its absolute change at each iteration for Enron Weekly.}
		\label{fig:enronWeekly}
	\end{center}
\end{figure}
\begin{figure}[ht]
	\begin{center}
		\includegraphics[width = 0.8\textwidth]{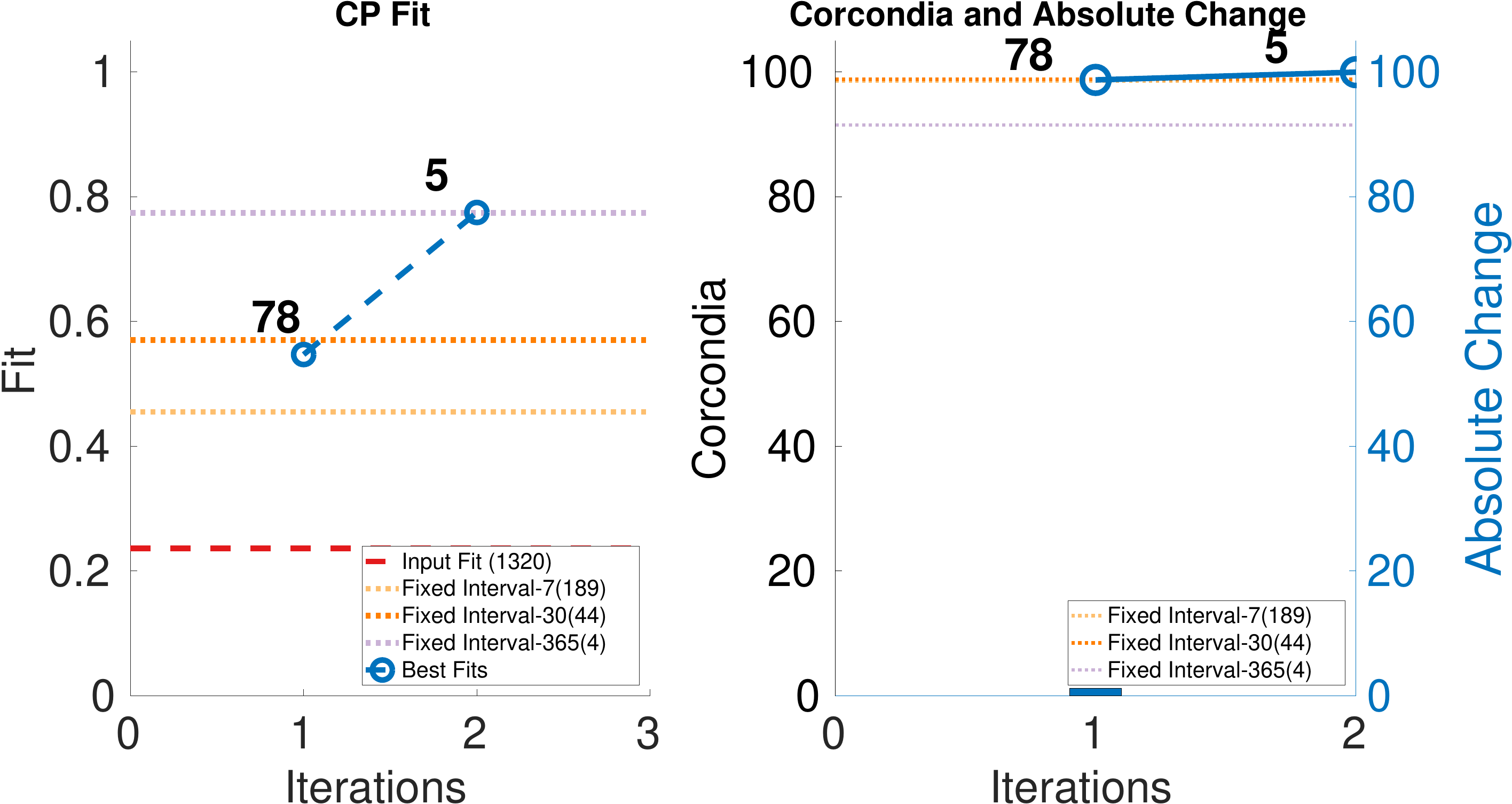}
		\caption{CP fit and CORCONDIA of best fit tensor \& its absolute change at each iteration for Enron Daily.}
		\label{fig:enronDaily}
	\end{center}
\end{figure}

\begin{figure}[ht]
	\begin{center}
		\includegraphics[width = 0.8\textwidth]{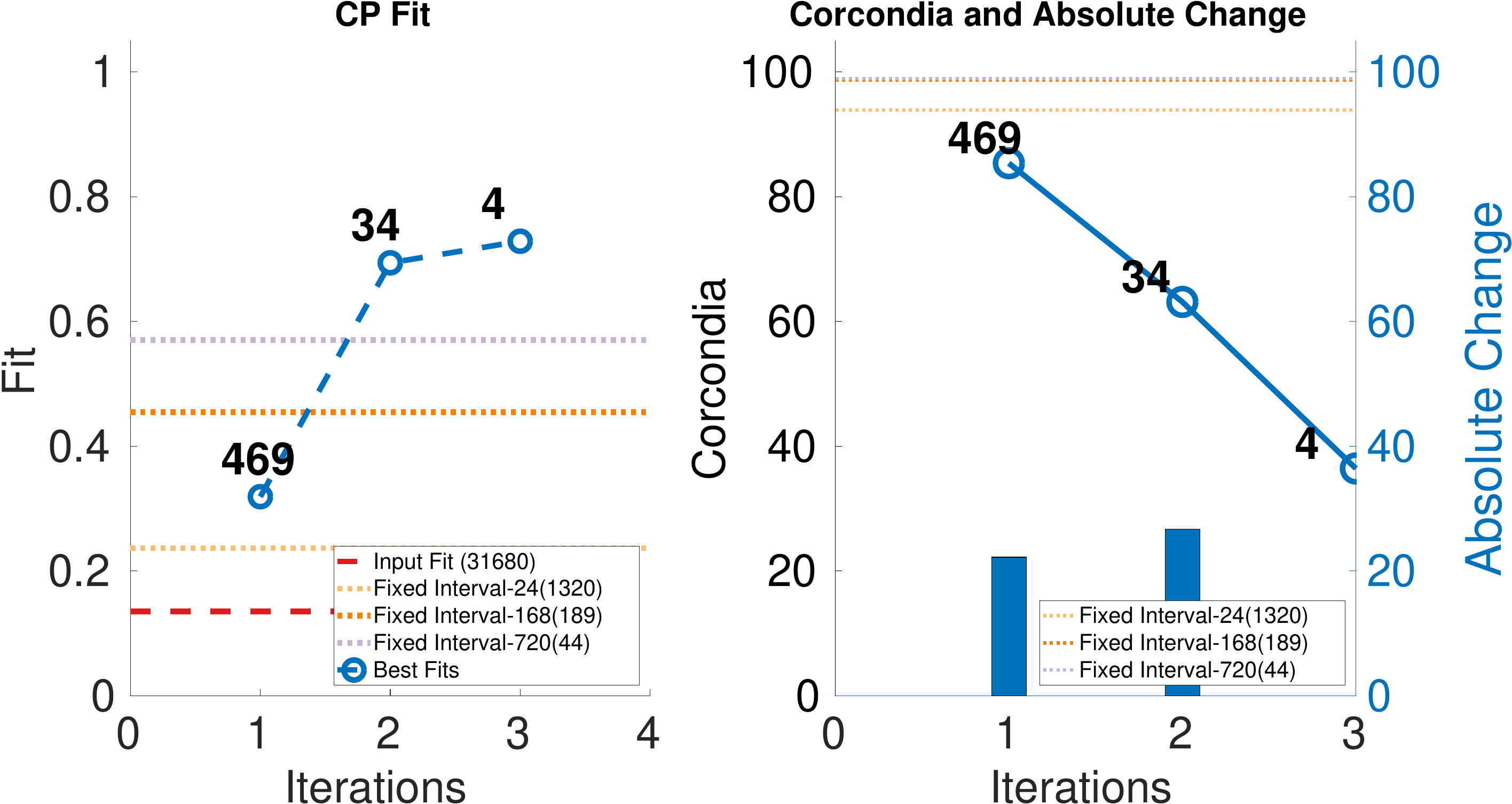}
		\caption{CP fit and CORCONDIA of best fit tensor \& its absolute change at each iteration for Enron Hourly.}
		\label{fig:enronHourly}
	\end{center}
\end{figure}

\subsection{Data mining case study}
\hfill\\
\noindent{\bf Chicago crime dataset}: For our case study we use a dataset provided by the city of Chicago\footnote{\url{https://data.cityofchicago.org/Public-Safety/Crimes-2001-to-Present/ijzp-q8t2}} that records different types of crime committed in different areas of the city over a period of time\cite{frosttdataset2017}. The tensor we create has modes (area, crime, timestamp), where ``community area'' and ``crime'' are discretized by the city of Chicago and ``timestamp'' is the coarsely aggregated (hourly) timestamp. The dates that we focused on span a period of $7$ years, between December 13, 2010 to December 11, 2017.

\begin{figure}[htp]
	\begin{center}
		\includegraphics[width =0.8\textwidth]{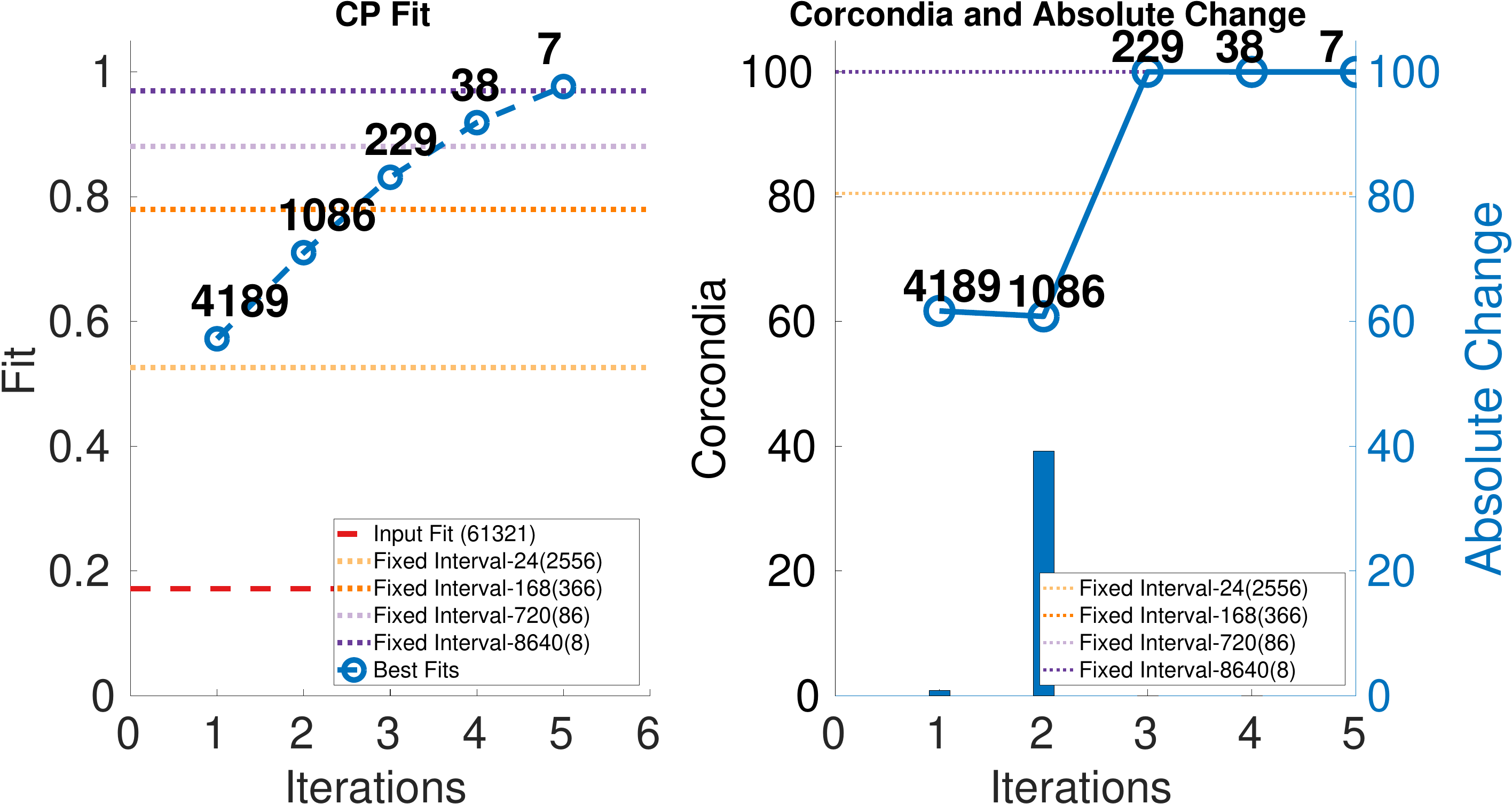}
		\caption{CP fit and CORCONDIA of best fit tensor \& its absolute change at each iteration for Chicago Crime Dataset.}
		\label{fig:chicago}
	\end{center}
	%\vspace{-0.3in}
\end{figure}
We ran \ibPlus on this dataset which of size $77\times 32 \times 61321$, and in right most part of the Figure \ref{fig:chicago} we show its CORCONDIA for each iterations and we observe that iterations 3,4 5 has high value of CORCONDIA, which would suggest they offer a resolution with an exploitable structure. Iteration 1 and 2 also have decent CORCONDIA value.
Given these two range of CORCONDIA values, we decided to drill down and look into the actual tensor components that can be extracted from those different tensors. In the interest of space, we took the tensor returned by iteration 2 as $\tensor{X}_1$, the tensor $\tensor{X}_2$ and tensor $\tensor{X}_3$ returned by iteration 3 and 4 respectively. Tensor $\tensor{X}_1$ contains three high-quality components, whereas $\tensor{X}_2$ and  $\tensor{X}_3$ contains two.

Figure \ref{fig:chicago_patterns} shows the two different sets of patterns\footnote{We omit plotting the temporal mode since we lack external information that we can potentially correlate it with, however, an analyst with such side information can find the different time resolutions of $\tensor{X}_1$ and $\tensor{X}_2$ useful.}: interestingly, factor 1 of $\tensor{X}_1$ and  factor 1 of $\tensor{X}_2$ pertain to the similar spatial and criminal pattern. In interest of space we omit the plots for $\tensor{X}_3$ but we observed that both factors of tensor $\tensor{X}_2$ and $\tensor{X}_3$ pertain to the similar spatial and criminal patterns. In summary, tensors $\tensor{X}_1$, $\tensor{X}_2$ and $\tensor{X}_3$ capture similar interpretable patterns over different temporal resolutions.

\begin{figure*}[htp]
	\begin{center}	
		\subfigure[Analysis of Iteration-2]{\includegraphics[width = 0.8\textwidth]{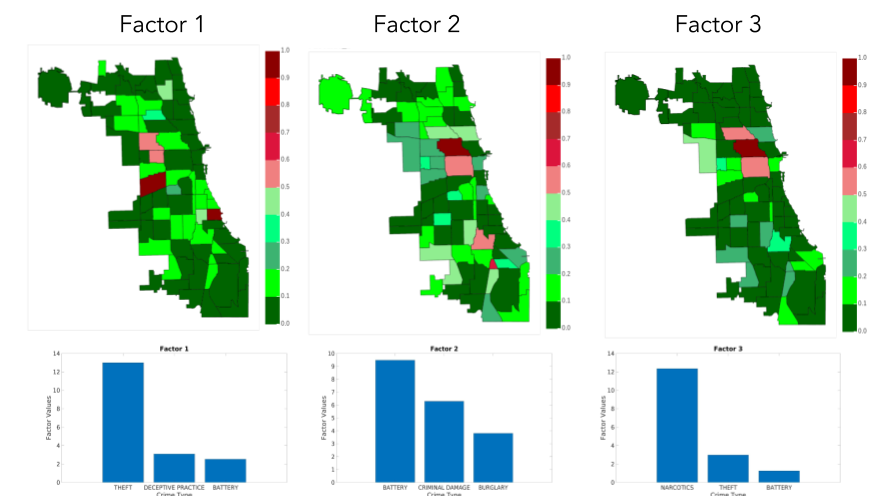}}
		\subfigure[Analysis of Iteration-3]{\includegraphics[width = 0.8\textwidth]{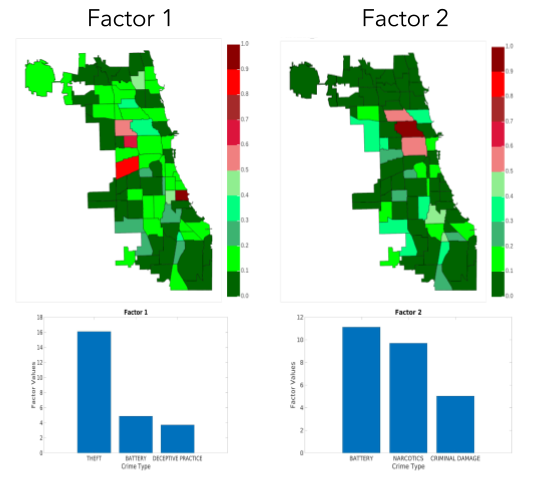}}
		\caption{Analyzing the Chicago data from two different resolutions discovered}
		\label{fig:chicago_patterns}
	\end{center}
		%\vspace{-0.3in}
\end{figure*}

\noindent{\bf Comparison against fixed aggregation}: A natural question is whether the results are qualitatively ``better'' than the ones by a fixed aggregation. Answering this question heavily depends on the application at hand, however, here we attempt to quantify this in the following way: intuitively, a good set of components offers more {\em diversity} in how much of the data it covers. For instance, a practitioner would prefer a set of results for the Chicago crime dataset where the components span most of the regions of the city and uncover diverse patterns of crime, over a set of components that seem to uncover a particular type of crime. Even though there may be a number of confounding factors, aggregating on a regular time interval may be very good in capturing periodic activity (in this example, crime that exhibits normal periodicity that happens to coincide with the aggregation resolution we have chosen), whereas aggregating adaptively may help discover structure that is more erratic and more surprising. In order to capture this and test this hypothesis, we compute the coverage of entities for the first and second mode of the tensor (i.e., areas of Chicago and crime types in this example) in all the discovered components: for each component, we measure the top-k entities, and through that we compute the empirical probability distribution of all entities in the results. A more preferable set of results will have a higher coverage, resulting in a distribution with higher entropy. In Table \ref{tbl:chicagoEntropy} we show the entropy for both modes 1 and 2 for \ibPlus and for the different fixed aggregations (averaged over 10 different runs), where \ibPlus overall offers more diverse patterns in both space and criminal activity.
 
\begin{center}
	\begin{table*}[h]
		\ssmall
		\begin{tabular}{|c|c|c|c|c|c|c|c|c|c|}
			\hline
			\multirow{1}{*}  & Iteration & Iteration & Iteration & Iteration & Iteration & Fixed & Fixed & Fixed & Fixed \\
			&1&2&3& 4 & 5 & Interval-24 & Interval-168 & Interval-720 & Interval-8640\\ \hline
			Area	& \textbf{2.8554} & 2.6810 & 2.5850 & 2.5850 & 2.5850 & \textcolor{red}{2.7255} & 2.5850 & 2.5850 & 2.5850  \\ \hline
			Crime	& \textbf{2.8783} & 2.7255 & 2.2516 & 2.2516 & 2.0850 & \textcolor{red}{2.4362} & 2.2516 & 2.2516 & 2.1183  \\ \hline
		\end{tabular}
		\caption{\small{Entropy of top-3 components in factors for area and crime type}}
		\label{tbl:chicagoEntropy}
	\end{table*}
\end{center}
	\section{Related Work}
\label{sec:related}

To the best of our knowledge, this is the first attempt at formalizing and solving this problem, especially as it pertains in the tensor and multi-aspect data mining domain.
Nevertheless, there has been significant amount of work on temporal aggregations in graphs
\cite{soundarajan2016generating,sulo2010meaningful,sun2007graphscope} and in finding communities in temporal graphs \cite{gorovits2018larc}. In the graph literature, the closest work to ours is \cite{soundarajan2016generating}, in which the authors looks at aggregating stream of temporal edges to produce sequence of structurally mature graphs based on a variety of network properties.

In the tensor literature, Almutairi et al. \cite{almutairi2021prema} are solving the inverse of this problem, where the goal is to disaggregate a tensor. Concurrently to our work, Kwon et al. \cite{DBLP:conf/icde/Kwon0LS21} develop a streaming CP decomposition that works on the original granularity of the data, instead of preprocessing the tensor in order to identify one or more optimal aggregations. We reserve a full investigation of our problem formulation and Kwon et al. \cite{DBLP:conf/icde/Kwon0LS21} for future work.

	\section{Conclusions}
\label{sec:conclusions}
In this paper we are, to the best of our knowledge, the first to define and formalize the \ourproblem problem in constructing a tensor from raw sparse data. We demonstrate that an optimal solution is intractable and subsequently proposed \method and \ibPlus, a practical solution that is able to identify good tensor structure from raw data, and construct tensors from the same dataset that pertain to multiple resolutions. Our experiments demonstrate the merit of \ibPlus in discovering useful and high-quality structure, as well as providing tools to data analysts in automatically extracting multi-resolution patterns from raw multi-aspect data. In future work we will work towards extending \method in cases where more than one modes is \ourproblem (naively one can apply \method to each mode sequentially, but this disregards joint variation across modes), and extend \method for higher-order tensors.

\section{Acknowledgements}
{\scriptsize
	Research was supported by the National Science Foundation under CAREER grant no. IIS 2046086. Research was also supported by the Department of the Navy, Naval
Engineering Education Consortium under award no. N00174-17-1-
0005. This research was sponsored by the Combat Capabilities Development Command Army Research Laboratory and was accomplished under Cooperative Agreement Number W911NF-13-2-0045 (ARL Cyber Security CRA). The views and conclusions contained in this document are those of the authors and should not be interpreted as representing the official policies, either expressed or implied, of the Combat Capabilities Development Command Army Research Laboratory or the U.S. Government. The U.S. Government is authorized to reproduce and distribute reprints for Government purposes not withstanding any copyright notation here on.
 }

\bibliographystyle{ACM-Reference-Format}
%\bibliography{BIB/ravdeep_refs.bib,BIB/vagelis_refs.bib, BIB/reference.bib}
\bibliography{BIB/reference}

\end{document}